\newtheorem{prop}{Proposition}
\newcommand{\minD}{\texttt{Min-}$\delta$}
\newcommand{\maxD}{\texttt{Max-}$\delta$}
\newcommand{\minL}{\texttt{Min-}$U$}
\newcommand{\maxL}{\texttt{Max-}$U$}
\newcommand{\dom}{\texttt{Dom}}
\newcommand{\flipLogit}{\texttt{Logit}}
\DeclareMathOperator*{\argmax}{\arg\!\max}
\newcommand{\flipgroup}{\emph{Indecisive}}
\newcommand{\noflipgroup}{\emph{Strict}}
\title{Indecision Modeling}
\author {

        Duncan C McElfresh,\textsuperscript{\rm 1}
        Lok Chan,\textsuperscript{\rm 2}
        Kenzie Doyle,\textsuperscript{\rm 3} \\
        Walter Sinnott-Armstrong,\textsuperscript{\rm 2}
        Vincent Conitzer,\textsuperscript{\rm 2}
        Jana Schaich Borg,\textsuperscript{\rm 2}
        John P Dickerson\textsuperscript{\rm 1}\\
}
\begin{document}

\maketitle

\begin{abstract}

AI systems are often used to make or contribute to important decisions in a growing range of applications, including criminal justice, hiring, and medicine.
Since these decisions impact human lives, it is important that the AI systems act in ways which align with human values. 
Techniques for preference modeling and social choice help researchers learn and aggregate peoples' preferences, which are used to guide AI behavior; thus, it is imperative that these learned preferences are accurate.
These techniques often assume that people are willing to express strict preferences over alternatives; which is not true in practice.
People are often indecisive, and especially so when their decision has moral implications.
The philosophy and psychology literature shows that indecision is a measurable and nuanced behavior---and that there are several different reasons people are indecisive.
This complicates the task of both learning and aggregating preferences, since most of the relevant literature makes restrictive assumptions on the meaning of indecision.
We begin to close this gap by formalizing several mathematical \emph{indecision} models based on theories from philosophy, psychology, and economics; these models can be used to describe (indecisive) agent decisions, both when they are allowed to express indecision and when they are not.
We test these models using data collected from an online survey where participants choose how to (hypothetically) allocate organs to patients waiting for a transplant.

\end{abstract}

\section*{Introduction}\label{sec:intro}

AI systems are currently used to make, or contribute to, many important decisions.
These systems are deployed in self-driving cars, organ allocation programs, businesses for hiring, and courtrooms to set bail.
It is an ongoing challenge for AI researchers to ensure that these systems make decisions that align with human values.

A growing body of research views this challenge through the lens of \emph{preference aggregation}.
From this perspective, researchers aim to (1) understand the preferences (or values) of the relevant \emph{stakeholders}, and (2) design an AI system that aligns with the aggregated preferences of all stakeholders.
This approach has been proposed recently in the context of self-driving cars~\cite{noothigattu2018voting} and organ allocation~\cite{freedman2020adapting}.
These approaches rely on a mathematical model of stakeholder preferences--which is typically \emph{learned} using data collected via hypothetical decision scenarios or online surveys.\footnote{The MIT Moral Machine project is one example: \url{https://www.moralmachine.net/}}
There is a rich literature addressing how to elicit preferences accurately and efficiently, spanning the fields of computer science, operations research, and social science.

It is critical that these \emph{observed} preferences accurately represent peoples' \emph{true} preferences, since these observations guide deployed AI systems.
Importantly, the way we measure (or \emph{elicit}) preferences is closely tied to the accuracy of these observations.
In particular, it is well-known that both the order in which questions are asked, and the set of choices presented, impact expressed preferences~\cite{day2012ordering,deshazo2002designing}.

Often people choose \emph{not} to express a strict preference, in which case we call them \emph{indecisive}.
The economics literature has suggested a variety of explanations for indecision~\cite{gerasimou2018indecisiveness}---for example when there are no desirable alternatives, or when all alternatives are perceived as equivalent.
Moral psychology research has found that people often ``do not want to play god'' in moral situations, and would prefer for somebody or something else to take responsibility for the decision~\cite{gangemi2013moral}.

 In philosophy, indecision of the kind discussed in this paper is typically linked to a class of moral problems called symmetrical dilemmas, in which an agent is confronted with the choice between two alternatives that are or appear to the agent equal in value \cite{Sinnott-Armstrong1988-SINMD}.\footnote{Sophie's Choice is a well-known example: a guard at the concentration camp cruelly forces Sophie to choose one of her two children to be killed. The guard will kill both children if Sophie refuses to choose. Sophie’s reason for not choosing one child applies equally to another, hence the symmetry.} Much of the literature concerns itself with the morality and rationality of the use of a randomizer, such as flipping a coin, to resolve these dilemmas. Despite some disagreements over details \cite{ McIntyre1990-MACMD,Donagan1984-DONCIR, Blackburn1996-BLADDP,Hare1981-HARMTI}, many philosophers do agree that flipping a coin is often a viable course of action in response to indecision\footnote{With some exceptions: for example, see \cite{Railton1992-RAIPDA}.}. 
 
The present study accepts the assumption that flipping a coin is typically an expression of one's preference to \emph{not} decide between two options, but goes beyond the received view in philosophy by suggesting that indecision can also be common and acceptable when the alternatives are \emph{asymmetric}. 
We show that people often do adopt coin flipping strategies in asymmetrical dilemmas, where the alternatives are not equal in value. Thus, the use of a randomizer is likely to play a more complex role in moral decision making than simply as a tie breaker for symmetrical dilemmas.

%
%
%
Naturally, people are also sometimes indecisive when faced with difficult decisions related to AI systems. 
However it is commonly assumed in the preference modeling literature that people always express a strict preference, unless (A) the alternatives are approximately equivalent, or (B) the alternatives are incomparable.
Assumption (A) is mathematically convenient, since it is necessary for preference \emph{transitivity}.\footnote{My preferences are transitive if ``I prefer A over B'' and ``I prefer B over C'' implies ``I prefer A over C''.} 
Since indecision is both a common and meaningful response, strict preferences alone cannot accurately represent peoples' real values.
Thus, AI researchers who wish to guide their systems using observed preferences should be aware of the hidden meanings of indecision.
We aim to uncover these meanings in a series of studies. 

\noindent\textbf{Our Contributions.} First, we conduct a pilot experiment to illustrate how different interpretations of indecision lead to different outcomes (\S~\ref{sec:pilot}). Using hypothesis testing, we reject the common assumption (A) that indecision is expressed only toward equivalent---or symmetric---alternatives.

Then, drawing on ideas from psychology, philosophy, and economics, we discuss several other potential reasons for indecision, drawing (\S~\ref{sec:indecision}).
We formalize these ideas as mathematical indecision \emph{models}, and develop a probabilistic interpretation that lends itself to computation (\S~\ref{sec:indecision-models}). 

To test the utility of these models, we conduct a second experiment to collect a much larger dataset of decision responses (\S~\ref{sec:study2}).
We take a machine learning (ML) perspective, and evaluate each model class based on its goodness-of-fit to this dataset.
We assess each model class for predicting \emph{individual} peoples' responses, and then we briefly investigate group decision models.

In all of our studies, we ask participants  \emph{who should receive the kidney?} in a hypothetical scenario where two patients are in need of a kidney, but only one kidney is available. 
As a potential basis for their answers, participants are given three ``features'' of each patient: age, amount of alcohol consumption, and number of young dependents. 

We chose this task for several reasons: first, kidney exchange is a real application where algorithms influence---and sometimes make---important decisions about who receives which organ.\footnote{Many  exchanges match patients and donors algorithmically, including the United Network for Organ Sharing (\url{https://unos.org/transplant/kidney-paired-donation/}) and the UK national exchange (\url{https://www.odt.nhs.uk/living-donation/uk-living-kidney-sharing-scheme/}).}
Second, organ allocation is a \emph{difficult} problem: there are far fewer donors organs than there are people in need of a transplant.\footnote{There are around $100,000$ people in need of a transplant today (\url{https://unos.org/data/transplant-trends/}), and about $22,000$ transplants have been conducted in $2020$.} Third, the question of \emph{who} should receive these scarce resources raises serious ethical dilemmas~\cite{scheunemann2011ethics}.
Kidney allocation is also a common motivation for studies of fair resource allocation~\cite{agarwal2019empirical,mcelfresh2018balancing,mattei2018fairness}.
Furthermore, this type of scenario is frequently used to study peoples' preferences and behavior~\cite{freedman2020adapting,furnham2000decisions,furnham2002allocation,oedingen2018public}.
Importantly, this prior work focuses on peoples' \emph{strict} preferences, while we aim to study indecision.

\section*{Study 1: Indecision is Not Random Choice}\label{sec:pilot}
We first conduct a pilot study to illustrate the importance of measuring indecision.
Here we take the perspective of a preference-aggregator; we illustrate this perspective using a brief example: 
Suppose we must choose between two alternatives (X or Y), based on the preferences of several stakeholders.
Using a survey we ask all stakeholders to express a \emph{strict} preference (to ``vote'') for their preferred alternative; X receives 10 votes while Y receives 6 votes, so X wins.

Next we conduct the same survey, but allow stakeholders to vote for ``indecision'' instead; now, X receives 4 votes, Y receives 5 votes, and ``indecision'' receives 7 votes.
If we assume that voters are indecisive only when alternatives are nearly equivalent (assumption (A) from Section~\ref{sec:intro}), then each ``indecision'' vote is analogous to one half-vote for both X and Y, and therefore Y wins. In other words, in the first survey we assume that all indecisive voters choose randomly between X and Y.
However, if indecision has another meaning, then it is not clear whether X or Y wins. 
Thus, in order to make the best decision for our constituents we must understand what meaning is conveyed by indecisive voters. 
Unfortunately for our hypothetical decision-maker, assumption (A) is not always valid.

Using a small study, we test---and reject---assumption (A), which we frame as two different hypotheses, \textbf{H0-1}: \emph{if we discard all indecisive votes, then both X and Y receive the same proportion votes, whether or not indecision is allowed.} 
A second related hypothesis is 
\textbf{H0-2}: \emph{if we assign half of a vote to both X and Y when someone is indecisive, then both X and Y receive the same proportion votes, whether or not indecision is allowed.} 
We conducted the hypothetical surveys described above, using 15 kidney allocation questions (see Appendix~\ref{app:survey-experiments} for the survey text and analysis).
Participants were divided into two groups: participants in group \flipgroup{} (N=62) were allowed to express indecision (phrased as ``flip a coin to decide who receives the kidney''), while group \noflipgroup{} (N=60) was forced to choose one of the two recipients.
We test \textbf{H0-1} by identifying the \emph{majority patient}, ``X'' (who received the most votes) and the \emph{minority patient} ``Y'' for each of the 15 questions (details of this analysis are in Appendix~\ref{app:survey-experiments}). 
Overall, group \flipgroup{} cast 581 (74) votes for the majority (minority) patient, and 275 indecision votes; the \noflipgroup{} group cast 751 (149) votes for the majority (minority) patient.
Using a Pearson's chi-squared test we reject \textbf{H0-1} ($p<0.01$).
According to \textbf{H0-2}, we might assume that all indecision votes are ``effectively'' one half-vote for both the minority and majority patient.
In this case, the \flipgroup{} group casts 718.5 (211.5) ``effective'' votes for the majority (minority) patients; using these votes we reject \textbf{H0-2} ($p<0.01$).

In the context of our hypothetical choice between X and Y, this finding is troublesome: since we reject \textbf{H0-1} and \textbf{H0-2}, we cannot choose a winner by selecting the alternative with the most votes---or, if indecision is measured, the most ``effective'' votes.
If indecision has other meanings, then the ``best'' alternative depends on which meanings are used by each person; this is our focus in the remainder of this paper.

\section*{Models for Indecision}\label{sec:indecision}

The psychology and philosophy literature find several reasons for indecision, and many of these reasons can be approximated by numerical decision models.
Before presenting these models, we briefly discuss their related theories from psychology and philosophy.  

\noindent\textbf{Difference-Based Indecision}
In the preference modeling literature it is sometimes assumed that people are indecisive only when both alternatives (X and Y) are indistinguishable. 
That is, the perceived difference between X and Y is too small to arrive at a strict preference. In philosophy, this is referred to as ``the possibility of parity'' \cite{Chang2002-px}.

\noindent\textbf{Desirability-Based Indecision}
In cases where both alternatives are not ``good enough'', people may be reluctant to choose one over the other.
This has been referred to as ``single option aversion''~\cite{mochon2013single}, when consumers do not choose between product options if none of the options is sufficiently likable. 
\citet{zakay1984choose} observes this effect in single-alternative choices: people reject an alternative if it is not sufficiently close to a hypothetical ``ideal''.
Similarly, people may be indecisive if \emph{both} alternatives are attractive. People faced with the choice between two highly valued options often opt for an indecisive resolution in order to manage negative emotions \cite{Luce1998-je}.

\noindent\textbf{Conflict-Based Indecision}
People may be indecisive when there are both good and bad attributes of each alternative. 
This is phrased as \emph{conflict} by \citet{tversky1992choice}: people have trouble deciding between two alternatives if neither is better than the other in \emph{every way}.
In the AI literature, the concept of \emph{incomparability} between alternatives is also studied~\cite{pini2011incompleteness}. 

While these notions are intuitively plausible, we need mathematical definitions in order to model observed preferences. That  is the purpose of the next section.

\section*{Indecision Model Formalism}\label{sec:indecision-models}

In accordance with the literature, we refer to decision-makers as \emph{agents}.
Agent preferences are represented by binary relations over each pair of items $(i, j)\in \mathcal I \times \mathcal I$, where $\mathcal I$ is a universe of items.
We assume agent preferences are \emph{complete}: when presented with item pair $(i, j)$, they expresses exactly one response $r\in \{0, 1, 2\}$, which indicates:
\begin{itemize}
    \item $r=1$, or $i \succ j$: the agent prefers $i$ more than $j$
    \item $r=2$, or $i \prec j$: the agent prefers $j$ more than $i$
    \item $r=0$, or $i \sim j$: the agent is indecisive between $i$ and $j$
\end{itemize}
When preferences are complete and transitive,\footnote{Agent preferences are transitive if $i \succ j$ and $i \succ k$ iff $i \succ k$.} then the preference relation corresponds to a weak ordering over all items~\cite{shapley1974game}.
In this case there is a utility function representation for agent preferences, such that $i \succ j \iff u(i) > u(j)$, and $i \sim j \iff u(i) = u(j)$, where $u :\mathcal I \rightarrow \mathbb R$ is a continuous function.
We assume each agent has an underlying utility function, however in general we \emph{do not} assume preferences are transitive. 
In other words, we assume agents can rank items based on their relative value (represented by $u(\cdot)$), but in some cases they consider other factors in their response---causing them to be indecisive.
Next, to model indecision we propose mathematical representations of the causes for indecision from Section~\ref{sec:indecision}.

\subsection*{Mathematical Indecision Models}\label{sec:models-flip}

All models in this section are specified by two parameters: a utility function $u(\cdot)$ and a threshold $\lambda$. 
Each model is based on \emph{scoring functions}: when the agent observes a query they assign a numerical score to each response, and they respond with the response type that has maximal score; we assume that score ties are broken randomly, though this assumption will not be important.
In accordance with the literature, we assume the agent observes random iid additive error for each response score (see, e.g.,~\citet{soufiani2013preference}).
Let $S_r(i,j)$ be the agent's score for response $r$ to comparison $(i,j)$; the agent's response is given by
$$ 
R(i, j) = \argmax_{r \in \{0, 1, 2\}} S_r(i, j) + \epsilon_{rij}.
$$
That is, the agent has a deterministic score for each response $S_r(i, j)$, but when making a decision the agent observes a noisy version of this score, $S_r(i, j) + \epsilon_{rij}$.
We make the common assumption that noise terms $\epsilon_{rij}$ are iid Gumbel-distributed, with scale $\mu=1$.
In this case, the distribution of agent responses is 
\begin{equation}\label{eq:dist-flip}
    p(i, j, r) = \frac{e^{S_r(i, j)}}{e^{ S_0(i, j)} + e^{ S_1(i, j)} + e^{ S_2(i, j)}}.
\end{equation}
Each indecision model is defined using different score functions $S_r(\cdot, \cdot)$.
Score functions for strict responses are always symmetric, in the sense that $S_{2}(i,j)=S_1(j,i)$; thus we need only define $S_1(\cdot, \cdot)$ and $S_0(\cdot, \cdot)$.
We group each model by their cause for indecision from Section~\ref{sec:indecision}.

\noindent\textbf{Difference-Based Models: \minD{}, \maxD{}}
Agents are indecisive when the utility difference between alternatives is either smaller than threshold $\lambda$ (\minD{}) or greater than $\lambda$ (\maxD{}). 
The score functions for these models are
\begin{equation*}
\begin{array}{rl}
    \text{\minD{}}: &\left\{ \begin{array}{rl}
         S_1(i, j) &\equiv u(i) - u(j) \\
         S_0(i, j) &\equiv \lambda 
    \end{array} \right.\\

        \text{\maxD{}}: & \left\{\begin{array}{rl}
         S_1(i, j) &\equiv  u(i) - u(j)  \\
         S_0(i, j) &\equiv 2|u(i) - u(j)| - \lambda
    \end{array}\right. \\
    \end{array}
\end{equation*}
Here $\lambda$ should be non-negative: for example with \minD{}, $\lambda\leq 0$ means the agent is never indecisive, while for \maxD{} this means the agent is always indecisive. 
Model \maxD{} seems counter-intuitive (if one alternative is clearly better than the other, why be indecisive?), yet we include it for completeness.
Note that this is only one example of a difference-based model: instead the agent might assess alternatives using a distance measure $d : \mathcal I \times \mathcal I \rightarrow \mathbb R_+$, rather than $u(\cdot)$.

\noindent\textbf{Desirability-Based Models: \minL{}, \maxL{}}
Agents are indecisive when the utility of \emph{both} alternatives is below threshold $\lambda$ (\minL{}), or when the utility of both alternatives is greater than $\lambda$ (\maxL{}).
Unlike the difference-based models, $\lambda$ here may be positive or negative.
The score functions for these models are
\begin{equation*}
\begin{array}{rl}
    \text{\minL{}}: &\left\{ \begin{array}{rl}
         S_1(i,j)&\equiv u(i)  \\
         S_0(i,j) &\equiv \lambda
    \end{array} \right.\\

        \text{\maxL{}}: & \left\{
        \begin{array}{rl}
         S_1(i, j) &\equiv  u(i) \\
         S_0(i, j) &\equiv 2 \min\{u(i),u(j)\} - \lambda
    \end{array}
        \right. \\
    \end{array}
\end{equation*}
%
%
Both of these models motivated in the literature (see \S~\ref{sec:indecision}).

\noindent\textbf{Conflict-Based Model: \dom{}}
In this model the agent is indecisive unless one alternative \emph{dominates} the other in all features, by threshold at least $\lambda$.
For this indecision model, we need a utility measure associated with each feature of each item; for this purpose, let $u_n(i)$ be the utility associated with feature $n$ of item $i$.
As before, $\lambda$ here may be positive or negative.
The score functions for this model are
\begin{equation*}
\begin{array}{rl}
    \text{\dom{}}: &\left\{ \begin{array}{rl}
    S_1(i,j) &\equiv \min_{n \in [N]}\left(u_n(i) - u_n(j)\right) \\
    S_0(i, j) &\equiv \lambda 
    \end{array} \right.
    \end{array}
\end{equation*}
This is one example of a conflict-based indecision model, though we might imagine others. 

These models serve as a class of hypotheses which describe how agents respond to comparisons when they are allowed to be indecisive.
Using the response distribution in (\ref{eq:dist-flip}), we can assess how well each model fits with an agent's (possibly indecisive) responses. 
However, in many cases agents are \emph{required} to express strict preferences---they are not allowed to be indecisive (as in Section~\ref{sec:pilot}).
With slight modification the score-based models from this section can be used even when agents are forced to express \emph{only} strict preferences; we discuss this in the next section.

\subsection*{Indecision Models for Strict Comparisons}\label{sec:models-strict}
We assume that agents may prefer to be indecisive, even when they are required to express strict preferences.
That is, we assume that agents use an underlying \emph{indecision} model to express \emph{strict} preferences.
When they cannot express indecision, we assume that they either \emph{resample} from their decision distribution, or they choose randomly.
That is, we assume agents use a two-stage process to respond to queries: first they sample a response $r$ from their response distribution $p(\cdot, \cdot, r)$; if $r$ is strict ($1$ or $2$), then they express it, and we are done.
If they sample indecision ($0$), then they flip a weighted coin to decide how to respond: 
\begin{enumerate}[leftmargin=1.2cm]
    \item[(heads)] with probability $q$ they re-sample from their response distribution until they sample a strict response, without flipping the weighted coin again
    \item[(tails)] with probability $1-q$ they choose uniformly at randomly between responses $1$ and $2$. 
\end{enumerate}
That is, they respond according to distribution 
\begin{align}\label{eq:noflip-dist}
p_{strict}(i, j, r) \equiv \begin{cases} 
\begin{array}{l} q \left( \frac{e^{S(i, j)}+ (1/2)e^{S_0(i, j)}}{C} \right) \\+ \frac{1-q}{D} \left( e^{S_1(i, j)} \right)  \end{array} & \text{if} \,\, r=1 \\
\begin{array}{l} q \left( \frac{e^{S_2(i, j)}+ (1/2)e^{S_0(i, j)}}{C} \right) \\+ \frac{1-q}{D}e^{S_2(i, j)}   \end{array}  & \text{if} \,\, r=2 
\end{cases}
\end{align}
Here, $C \equiv e^{S_0(i, j)} + e^{S_1(i, j)}+ e^{S_2(i, j)}$ , and $D\equiv e^{S_1(i, j)}   + e^{S(_2(i, j)}$.
The (heads) condition from above has another interpretation: the agent chooses to sample from a ``strict'' logit, induced by only the score functions for strict responses, $S_1(i, j)$ and $S_1(i, j)$.
We discuss this model in more detail, and provide an intuitive example, in Appendix~\ref{app:model-fitting}.

We now have mathematical indecision models which describe how indecisive agents respond to comparison queries, both when they are allowed to express indecision (\S~\ref{sec:models-flip}), and when they are not (\S~\ref{sec:models-strict}).
The model in this section, and response distributions (\ref{eq:dist-flip}) and (\ref{eq:noflip-dist}), represent one way indecisive agents might respond when they are forced to express strict preferences.
The question remains whether any of these models accurately represent peoples' expressed preferences in real decision scenarios.
In the next section we conduct a second, larger survey to address this question.

\section*{Study 2: Fitting Indecision Models}\label{sec:study2}
In our second study, we aim to \emph{model} peoples' responses in the hypothetical kidney allocation scenario using indecision models from the previous section as well as standard preference models from the literature.
The models from the previous section can be used to predict peoples' responses, both when they are allowed to be indecisive, and when they are not.
To test both class of models, we conducted a survey with two groups of participants, where one group was were given the option to express indecision, and the other was not.
Each participant was assigned to 1 of the 150 random sequences, each of which contains 40 pairwise comparisons between two hypothetical kidney recipients with randomly generated values for age, number of dependents, and number of alcoholic drinks per week. We recruited 150 participants for group \flipgroup{}, which was given the option to express indecision\footnote{As in Study 1, this is phrased as ``flip a coin.''}. 18 participants were excluded from the analysis for failing attention checks, leaving us with a final sample of N=132. Another group, \noflipgroup{} (N=132), was recruited to respond to the same 132 sequences, but without the option to express indecision.

We remove 26 participants from \flipgroup{} who never express indecision, because it is not sensible to compare goodness-of-fit for different indecision models when the agent never chooses to be indecisive.
This study was reviewed and approved by our organization's Institutional Review Board; please see Appendix~\ref{app:survey-experiments} for a full description of the survey and dataset. 

\noindent\textbf{Model Fitting.}
In order to fit these indecision models to data, we assume that agent utility functions are linear: each item $i \in \mathcal I$ is represented by feature vector $\bm x^i \in \mathbb R^N$; agent utility for item $i$ is $u(i)=\bm u^\top \bm x^i$, where $\bm u \in \mathbb R^N$ is the agent's \emph{utility vector}.
We take a maximum likelihood estimation (MLE) approach to fitting each model: i.e., we select agent parameters $\bm u$ and $\lambda$ which maximize the log-likelihood (LL) of the training responses.
Since the LL of these models is not convex, we use random search via a Sobol process~\cite{sobol1967distribution}.
The search domain for utility vectors is $\bm u\in [-1, 1]^N$, the domain for probability parameters is $(0, 1)$, and the domain for $\lambda$ depends on the model type (see Appendix~\ref{app:model-fitting}).
The number of candidate parameters tested and the nature of the train-test split vary between experiments.
All code used for our analysis is available online, \footnote{\url{https://github.com/duncanmcelfresh/indecision-modeling}} and details of our implementation can be found in Appendix~\ref{app:model-fitting}.

We explore two different preference-modeling settings: learning individual indecision models, and learning group indecision models.

\subsection*{Individual Indecision Models}\label{sec:flip-experiments}
The indecision models from Section~\ref{sec:indecision-models} are indented to describe how an indecisive agent responds to queries---both when they are given the option to be indecisive, and when they are not.
Thus, we fit each of these models to responses from both participant groups: \flipgroup{} and \noflipgroup{}.
For each participant we randomly split their question-response pairs into a training and testing set of equal size (20 responses each).
For each participant we fit all five models from Section~\ref{sec:indecision-models}, and two baseline methods: \texttt{Rand} (express indecision with probability $q$ and chooses randomly between alternatives otherwise), \texttt{MLP} (a multilayer perceptron classifier with two hidden layers with 32 and 16 nodes).
%
%
We use \texttt{MLP} as a state-of-the-art benchmark, against which we compare our models; we use this benchmark to see how close our new models are to modern ML methods.

For group \flipgroup{} we estimate parameter $q$ for \texttt{NaiveRand} from the training queries; for \noflipgroup{} $q$ is $0$.
For \texttt{MLP} we train a classifier with one class for each response type, using scikit-learn~\cite{scikit-learn}: for \flipgroup{} responses we train a three-class model ($r\in \{0, 1, 2\}$), and for \noflipgroup{} we train a two-class model ($r\in \{1, 2\}$).

\begin{table*}
\begin{tabular}{@{}lrrrcccrrcc@{}}
\toprule
& \multicolumn{4}{c}{Group \flipgroup{} (both indecision and strict responses)} & & \multicolumn{4}{c}{Group \noflipgroup{} (only strict responses)}\\
\cmidrule(lr){2-5} \cmidrule(lr){7-10}
Model    & \#1st & \#2nd  & \#3rd  & Train/Test LL & & \# 1st & \# 2nd  & \# 3rd  & Train/Test LL \\
\midrule
\minD{}& 29 (27\%) & 23 (22\%)  & 13 (12\%) & -0.82/-0.85   &  & 26 (20\%) & 53 (40\%) & 34 (26\%) & -0.44/-0.47   \\
\maxD{}     & 11 (10\%) & 12 (11\%)  & 19 (18\%) & -0.81/-0.90   &  & 31 (23\%) & 57 (43\%) & 25 (19\%) & -0.44/-0.47   \\ 
\minL{}              & 8 (8\%)   & 32 (30\%)  & 17 (16\%) & -0.83/-0.88   &  & 1 (1\%)   & 5 (4\%)   & 20 (15\%) & -0.53/-0.56   \\
\maxL{}  & 22 (21\%) & 23 (22\%)  & 12 (11\%) & -0.81/-0.83   &  & 1 (1\%)   & 5 (4\%)   & 15 (11\%) & -0.53/-0.55   \\
\dom{}   & 0 (0\%)   & 3 (3\%)    & 9 (8\%)   & -0.88/-0.95   &  & 2 (2\%)   & 4 (3\%)   & 3 (2\%)   & -0.57/-0.58   \\
\midrule
\flipLogit{}    & 5 (5\%)   & 12 (11\%)  & 31 (29\%) & -0.84/-0.90   &  & 4 (3\%)   & 5 (4\%)   & 27 (20\%) & -0.53/-0.55   \\
\texttt{Rand} & 1 (1\%)   & 0 (0\%)    & 3 (3\%)   & -1.10/-1.10   &  & 6 (5\%)   & 0 (0\%)   & 1 (1\%)   & -0.69/-0.69   \\
\texttt{MLP}   & 30 (28\%) & 1 (1\%)    & 2 (2\%)   & -0.04/-1.15   &  & 61 (46\%) & 3 (2\%)   & 7 (5\%)   & -0.03/-0.49   \\
\bottomrule
\end{tabular}
\caption{\label{tab:individual-results}
Best-fit models for individual participants in group \flipgroup{} (left) and \noflipgroup{} (right).
The number of participants for which each model has the largest test log-likelihood (\#1st), second-largest test LL (\#2nd), as well as third-largest (\#3rd) are given for each model, and the median training and test LL over all participants.
}
\end{table*}

\noindent\textbf{Goodness-of-fit.} Using the standard ML approach, we select the best-fit models for each agent using the training-set LL, and evaluate the performance of these best-fit models using the test-set LL.
Table~\ref{tab:individual-results} shows the number of participants for which each model was the $1$st-, $2$nd-, and $3$rd best-fit for each participant (those with the greatest training-set LL), and the median test and train LL for each model.
First we observe that \emph{no indecision model} is a clear winner: several different models appear in the top 3 for each participant.
This suggests that different indecision models fit different individuals better than others --- there is not a single model that reflects everyone's choices.
However, some models perform better than others: \minD{} and \maxD{} appear often in the top 3 models, as does \maxL{} for group \flipgroup{}.

It it is somewhat surprising the \maxD{} fits participant responses, since this model does not seem intuitive: in \maxD{}, agents are indecisive when two alternatives have \emph{very different} utility---i.e. one has much greater utility than the other.
It is also surprising the \maxL{} is a good fit for group \flipgroup{}, but not for \noflipgroup{}.
One interpretation of this fact is that some people use (a version of) \maxL{} when they have the option, but they \emph{do not} use \maxL{} when indecision is not an option.
Another interpretation is that our modeling assumptions in Section~\ref{sec:models-strict} are wrong---however our dataset cannot definitively explain this discrepancy.

Finally, \texttt{MLP} is the most common best-fit model for all participants in both groups, though it is rarely a 2nd- or 3rd-best fit.
This suggests that the \texttt{MLP} benchmark accurately models \emph{some} participants' responses, and performs poorly for others; we expect this is due to overfitting.
%
%
While \texttt{MLP} is more accurate than our models in some cases, it does not shed light on why people are indecisive. 

\begin{figure}	
    %
	\begin{subfigure}[t]{0.47\textwidth}
		\centering
		\includegraphics[width=\linewidth]{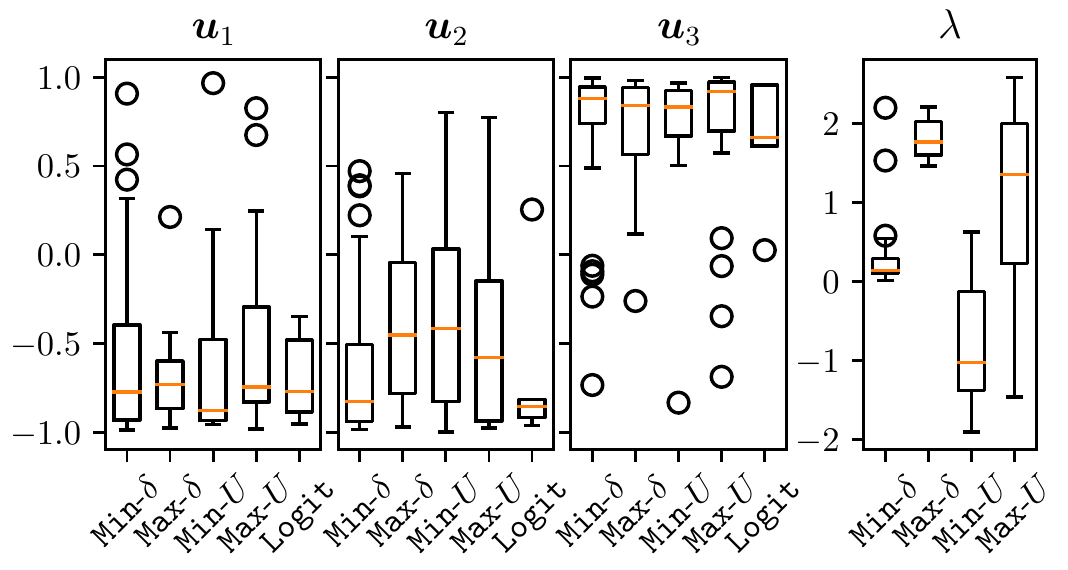}
		\caption{Participant best-fit model parameters for \flipgroup{}}
	\end{subfigure}
	\hfill
	%
	\begin{subfigure}[t]{0.47\textwidth}
	\centering
	\includegraphics[width=\linewidth]{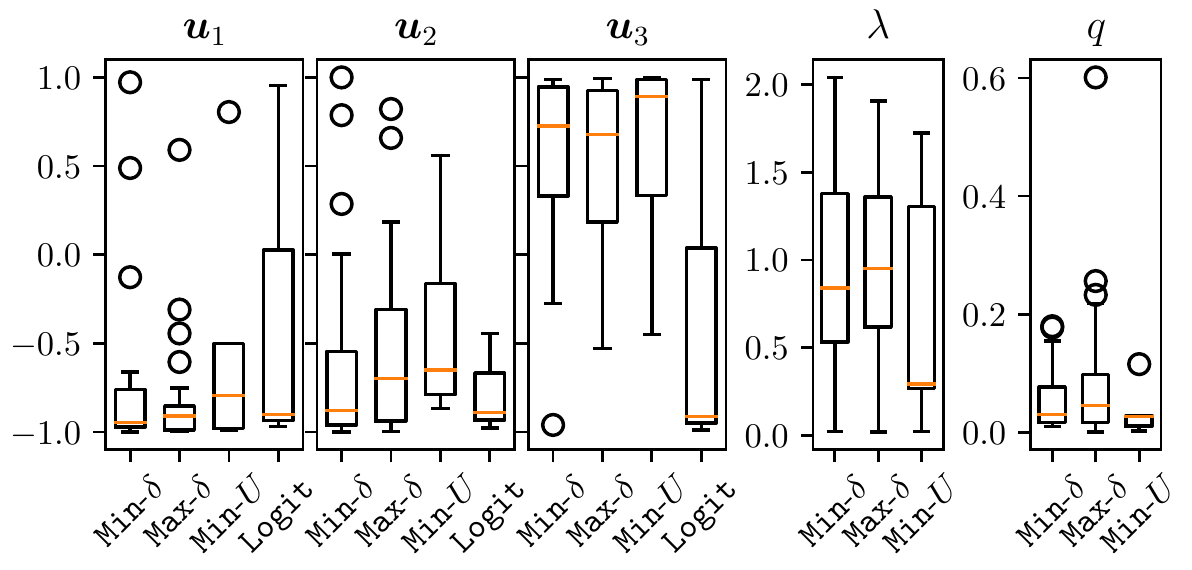}
		\caption{Participant best-fit model parameters for \noflipgroup{}}
	\end{subfigure}
	\caption{Best-fit parameters for each indecision model, for participants in group \flipgroup{} (top) and \noflipgroup{} (bottom).
	Elements of the agent utility vector correspond to patient age ($\bm u_1$), alcohol consumption ($\bm u_2$), and number of dependents ($\bm u_3$); the interpretation of $\lambda$ depends on the model class.
    Only participants for which the model is the 1st-best-fit are included (see Table 1).
	}\label{fig:u-individual}
\end{figure}

It is notable that some indecision models (\minD{} and \maxD{}) outperform the standard logit model (\flipLogit{}), both when they are learned from responses including indecision (group \flipgroup{}), and when they are learned from only strict responses (group \noflipgroup{}).
Thus, we believe that these indecision models give a more-accurate representation for peoples' decisions than the standard logit, both when they are given the option to be indecisive, and when they are not.

Since these indecision models may be accurate representations of peoples' choices, it is informative to examine the best-fit parameters.
Figure~\ref{fig:u-individual} shows best-fit parameters for participants in group \flipgroup{} (top) and \noflipgroup{} (bottom); for each indecision model, we show all learned parameters for participants for whom the model is the 1st-best-fit (see Table~\ref{tab:individual-results}).
Importantly, the best-fit values of $\bm u_1$, $\bm u_2$, and $\bm u_3$ are similar for all models, in both groups.
That is, \emph{in general}, people have similar relative valuations for different alternatives: $\bm u_1<0$ means younger patients are preferred over older patients, $\bm u_2<0$ means patients who consume less alcohol are preferred more; $\bm u_3>0$ means that patients with more dependents are preferred more.
We emphasize that the indecision model parameters for group \noflipgroup{} (bottom panel of Figure~\ref{fig:u-individual}) are learned using only strict responses.

These models are fit using only 20 samples, yet they provide useful insight into how people make decisions.
Importantly, our simple indecision models fit observed data better than the standard logit---both when people can express indecision, and when they cannot.
Thus, contrary to the common assumption in the literature, not all people are indecisive \emph{only} when two alternatives are nearly equivalent.
This assumption may be true for some people (participants for which \minD{} is a best-fit model), but it is not always true.

\subsection*{Group Models}\label{sec:group}
Next we turn to group decision models, where the goal is for an AI system to make decisions that reflect the values of a certain group of humans.
In the spirit of the social choice literature, we refer to agents as ``voters'', and suggested decisions as ``votes''.
We consider two distinct learning paradigms, where each reflects a potential use-case of an AI decision-making system.

The first paradigm, \emph{Population Modeling}, concerns a large or infinite number of voters; our goal is to estimate responses to new decision problems that are the {\em best} for the entire population.
This scenario is similar to conducting a national poll: we have a population including thousands or millions of voters, but we can only sample a small number (say, hundreds) of votes.
Thus, we aim to build a model that represents the entire population, using a small number of votes from a small number of voters.
There are several ways to aggregate uncertain voter models (see for example Chapter 10 of~\citet{handbook}); our approach is to estimate the next vote from a random voter in the population.
Since we cannot observe all voters, our model should generalize not only a ``known'' voter's future behavior, but \emph{all} voters' future behavior.

In the second paradigm, \emph{Representative Decisions}, we have a small number of ``representative'' voters; our goal is to estimate best responses to new decision problems for this group of representatives.
This scenario is similar to multi-stakeholder decisions including organ allocation or public policy design: these decisions are made by a small number of representatives (e.g., experts in medicine or policy), who often have very limited time to express their preferences.
As in \emph{Population Modeling} we aim to estimate the next vote from a random expert---however in this paradigm, all voters are ``known'', i.e., in the training data.

Both voting paradigms can be represented as a machine learning problem: observed votes are ``data'', with which we select a best-fit model from a hypothesis class; these models make predictions about future votes.\footnote{Several researchers have used techniques from machine learning for social choice~ \cite{doucette2015conventional,conitzer2017moral,kahng2019statistical,zhang2019pac}.}~
Thus, we split all observed votes into a training set (for model fitting) and a test set (for evaluation).
How we split votes into a training and test set is important: in \textit{Representative Decisions} we aim to predict future decisions from a \emph{known} pool of voters---so both the training and test set should contain votes from each voter.
In \textit{Population Modeling} we aim to predict future decisions from the entire voter population---so the training set should contain only some votes from some voters (i.e., ``training'' voters), while the test set should contain the remaining votes from training voters, and all responses from the non-training voters.

We propose several group indecision models, each of which is based on the models from Section~\ref{sec:indecision-models}; please see Appendix~\ref{app:model-fitting} for more details.

\noindent\textbf{\texttt{VMixture} Model.}
We first learn a best-fit indecision (sub)model for each training voter; the overall model generates responses by first selecting a training voter uniformly at random, and then responding according to their submodel.

\noindent\textbf{$k$-\texttt{Mixture} Model.}
This model consists of $k$ submodels, each of which is an indecision model with its own utility vector $\bm u$ and threshold $\lambda$.
The \emph{type} of each submodel (\texttt{Min}/\maxD{}, \texttt{Min}/\maxL{}, \dom{}) is itself a categorical variable.
Weight parameters $\bm w \in \mathbb R^k$ indicate the importance of each submodel. 
This model votes by selecting a submodel from the softmax distribution\footnote{With the softmax distribution, the probability of selecting $i$ is $e^{\bm w_i}/\sum_{j} e^{\bm w_j}$. We use this distribution for mathematical convenience, though it is straightforward to learn the distribution directly.}~
on $\bm w$, and responds according to the chosen submodel.

\noindent\textbf{$k$-\minD{} Mixture.}
This model is equivalent to $k$-\texttt{Mixture}, however all submodels are of type \minD{}. 
We include this model since \minD{} is the most-common best-fit indecision model for individual participants (see \S~\ref{sec:study2}).

We simulate both the \textit{Population Modeling} and \textit{Representative Decisions} settings using various train/test splits of our survey data.
For \textit{Population Modeling} we randomly select $100$ training voters; half of each training voter's responses are added to the test set, and half to the training set.
All responses from non-training voters are added to the test set.\footnote{Each voter in our data answers different questions, so all questions in the test set are ``new.''}

For \textit{Representative Decisions} we randomly select $20$ training voters (``representatives''), and randomly select half of each voter's responses for testing; all other responses are used for training; all non-training voters are ignored.

\setlength{\tabcolsep}{4pt}

\begin{table}
{\footnotesize
\begin{tabular}{@{}lcccc@{}}
\toprule
Model Name                               & \multicolumn{2}{c}{\textit{Represenatitives} (20)} & \multicolumn{2}{c}{\textit{Population} (100)} \\
\cmidrule(lr){2-3}\cmidrule(lr){4-5}
                                         & \flipgroup{}&  \noflipgroup{} &   \flipgroup{}&   \noflipgroup{}\\
                                         \midrule
2-\minD{} & -0.90/-0.88      & -0.46/-0.47      & -0.87/-0.88    & -0.54/-0.52         \\
2-\texttt{Mixture}  & -0.87/\textbf{-0.86}      & -0.45/-0.47     & -0.87/-0.88    & -0.53/-0.52       \\
\texttt{VMixture} & -0.92/-0.90      & -0.49/-0.51   & -0.93/-0.94   & -0.57/-0.56          \\
\midrule
\minD{} & -0.92/-0.90      & -0.46/-0.48    & -0.87/-0.87   & -0.54/-0.53          \\
\maxD{}  & -0.95/-0.90      & -0.45/\textbf{-0.46}   & -0.96/-0.95    & -0.54/-0.52           \\
\minL{}& -0.96/-0.95      & -0.52/-0.54     & -0.98/-0.99  & -0.58/-0.57         \\
\maxL{}   & -0.87/\textbf{-0.86}      & -0.54/-0.54      & -0.94/-0.94  & -0.58/-0.57         \\
\dom{}    & -1.08/-1.07      & -0.57/-0.58    & -1.05/-1.06   & -0.61/-0.60         \\
\midrule
\texttt{MLP}& -0.40/-1.55      & -0.15/-0.85     & -0.71/\textbf{-0.77}  & -0.42/\textbf{-0.51}         \\
\flipLogit{}   & -0.91/-0.88      & -0.53/-0.54     & -0.93/-0.94   & -0.57/-0.56       \\
\texttt{Rand}   & -1.03/-1.00      & N/A             & -1.07/-1.07   & N/A                 \\
\bottomrule
\end{tabular}
}
\caption{\label{tab:group-expert-population} 
Average train-set and test-set LL per question (reported as ``train/test'') for \emph{Representative Decisions} with 20 training voters,  (left) and \emph{Population Modeling} with 100 training voters (right), for both the \flipgroup{} and \noflipgroup{} participant groups.
The greatest test-set LL is highlighted for each column.
For \emph{Representatives}, the test set includes only votes from the representative voters; for \textit{Population}, the test set includes all voters.
}
\end{table}

For both of these settings we fit all mixture models ($2$-\texttt{Mixture}, $2$-\minD{}, and \texttt{VMixture}), each individual indecision model from Section~\ref{sec:indecision-models}, and each each baseline model.
Table~\ref{tab:group-expert-population} shows the training-set and test-set LL for each method, for both voting paradigms.
Most indecision models achieve similar test-set LL, with the exception of \dom{}.
In the \textit{Representatives} setting, both mixture models and (non-mixture) indecision models perform well (notably, better than \texttt{MLP}.
This is somewhat expected, as the \textit{Representatives} setting uses very little training data, and complex ML approaches such as \texttt{MLP} are prone to overfitting---this is certainly the case in our experiments.
In the \textit{Population} setting the mixture models outperform individual indecision models; this is expected, as these mixture models have a strictly larger hypothesis class than any individual model.
Unsurprisingly, \texttt{MLP} achieves the greatest test-set LL in the \textit{Population} setting---yet provides no insight as to how these decisions are made.

\section*{Discussion}
In many cases it is natural to feel indecisive, for example when voting in an election or buying a new car; people are especially indecisive when their choices have moral consequences.
Importantly, there are many possible \emph{causes} for indecision, and each conveys different meaning: I may be indecisive when voting for a presidential candidate because I feel unqualified to vote; I may be indecisive when buying a car because all options seem too similar.
Using a small study, in Section~\ref{sec:pilot} we demonstrate that indecision cannot be interpreted as a ``flipping a coin'' to decide between alternatives.
This violates a key assumption in the technical literature, and it complicates the task of selecting the \emph{best} alternative for an individual or group.
Indeed, defining the ``best'' alternative for indecisive agents depends on what indecision means.

These philosophical and psychological questions have become critical to computer science researchers, since we now use preference modeling and social choice to guide deployed AI systems.
The indecision models we develop in Section~\ref{sec:indecision-models} and test in Section~\ref{sec:study2} provide a framework for understanding why people are indecisive---and how indecision may influence expressed preferences when people are allowed to be indecisive (\S~\ref{sec:models-flip}), and when they are required to express strict preferences (\S~\ref{sec:models-strict}).
The datasets collected in Study 1 (\S~\ref{sec:pilot}) and Study 2 (\S~\ref{sec:study2}) provide some insight into the causes for indecision, and we believe other researchers will uncover more insights from this data in the future.

Several questions remain for future work.
First, what are the causes for indecision, and what meaning do they convey? 
This question is well-studied in the philosophy and social science literature, and AI researchers would benefit from interdisciplinary collaboration. 
Methods for preference elicitation~\cite{blum2004preference} and active learning~\cite{freund1997selective} may be useful here.

Second, if indecision has meaning beyond the desire to ``flip a coin'', then what is the best outcome for an indecisive agent?
... for a group of indecisive agents?
This might be seen as a problem of winner determination, from a perspective of social choice~\cite{pini2011incompleteness}. \\ \\

\section*{Acknowledgements} Dickerson and McElfresh were supported in part by NSF CAREER IIS-1846237, NSF CCF-1852352, NSF D-ISN \#2039862, NIST MSE \#20126334, NIH R01 NLM-013039-01, DARPA GARD \#HR00112020007, DoD WHS \#HQ003420F0035, DARPA Disruptioneering (SI3-CMD) \#S4761, and a Google Faculty Research Award.  
Conitzer was supported in part by NSF IIS-1814056. This publication was made possible through the support of a grant (TWCF0321) from Templeton World Charity Foundation, Inc. to Conitzer, Schaich Borg, and Sinnott-Armstrong. The opinions expressed in this publication are those of the authors and do not necessarily reflect the views of Templeton World Charity Foundation, Inc.


\section*{Ethics Statement}
Many AI systems are designed and constructed with the goal of promoting the interests, values, and preferences of users and stakeholders who are affected by the AI systems. Such systems are deployed to make or guide important decisions in a wide variety of contexts, including medicine, law, business, transportation, and the military. When these systems go wrong, they can cause irreparable harm and injustice. There is, thus, a strong moral imperative to determine which AI systems best serve the interests, values, and preferences of those who are or might be affected.

To satisfy this imperative, designers of AI systems need to know what affected parties really want and value. Most surveys and experiments that attempt to answer this question study decisions between two options without giving participants any chance to refuse to decide or to adopt a random method, such as flipping a coin. Our studies show that these common methods are inadequate, because providing this third option—which we call indecision—changes the preferences that participants express in their behavior. Our results also suggest that people often decide to use a random method for variety of reasons captured by the models we studied. Thus, we need to use these more complex methods—that is, to allow indecision—in order to discover and design AI systems to serve what people really value and see as morally permitted. That lesson is the first ethical implication of our research.

Our paper also teaches important ethical lessons regarding justice in data collection. It has been shown that biases can, and are, introduced at the level of data collection.  Our results open the door to the suggestion that biases could be introduced when a participant’s values are elicited under the assumption of a strict preference. Consider a simple case of choosing between two potential kidney recipients, A and B, who are identical in all aspects, except A has 3 drinks a week while B has 4. Throughout our studies, we have consistently observed that  participants would overwhelmingly give the kidney to patient A who has 1 fewer drink each week, when forced to choose between them. However, when given the option to do so, most would rather flip a coin. An argument can be made here that the data collection mechanism under the strict-preference assumption is biased against patient B and others who drink more than average.

Finally, our studies also have significant relevance to randomness as a means of achieving fairness in algorithms. As our participants were asked to make moral decisions regarding who should get the kidney, one interpretation of their decisions to flip a coin is that the fair thing to do is often to flip a coin so that they (and humans in general) do not have to make an arbitrary decision. The modeling techniques proposed here differ from the approach to fairness that conceives random decisions as guaranteeing equity in the distribution of resources. Our findings about model fit suggest that humans sometimes employ random methods largely in order to avoid making a difficult decision (and perhaps also in order to avoid personal responsibility). If our techniques are applied to additional problems, they will further the discussion of algorithmic fairness by emphasizing the role of randomness and indecision. This advance can improve the ability of AI systems to serve their purposes within moral constraints.

\noindent\textbf{Experiment Scenario: Organ Allocation.} 
Our experiments focus on a hypothetical scenario involving the allocation of scarce donor organs. 
We use organ allocation since it is a real, ethically-fraught problem, which often involves AI or other algorithmic guidance.
However our hypothetical organ allocation, and our survey experiments, are not intended to reflect the many ethical and logistical challenges of organ transplantation; these issues are settled by medical experts and policymakers.
Our experiments do not focus on a realistic organ allocation scenario, and our results should not be interpreted as guidance for transplantation policy.

\bibliography{refs}

\clearpage
\appendix

\section{Online Survey Experiments}\label{app:survey-experiments}

This appendix describes our survey experiments in greater detail. 
~\ref{app:platform} describes the online platform we used for this survey, Section~\ref{sec:app-study-1} describes Study 1 and our analysis, and Section~\ref{sec:app-study-2} describes the design of Study 2.

\subsection{Online Platform}\label{app:platform}

All online experiments were conducted using a custom online survey platform.
After agreeing to an online consent form, participants were shown background information on kidney allocation and about the patient features in this survey, shown below:
\begin{displayquote}
Sometimes people with certain diseases or injuries require a kidney transplant. If they don't have a biologically compatible friend or family member who is willing to donate a kidney to them, they must wait to receive a kidney from a stranger.

Choose which of two patients should receive a sole available kidney. Information about Patient A will always be on the left. Information about Patient B will always be on the right. The characteristics of each patient will change in each trial. Patients who do receive the kidney will undergo an operation that is almost always successful. Patients who do not receive the kidney will remain on dialysis and are likely to die within a year.
\end{displayquote}

After completing an online consent form, participants were asked to respond to a series of comparisons between two potential kidney recipients.
Each recipient is represented by three features: ``number of child dependent(s)'', ``years old'', and ``drinks per day prediagnosis.'' 
Figure~\ref{fig:screenshot} shows a screenshot of the decision scenario.
\begin{figure}[h]
    \centering
    \includegraphics[width=0.45\textwidth]{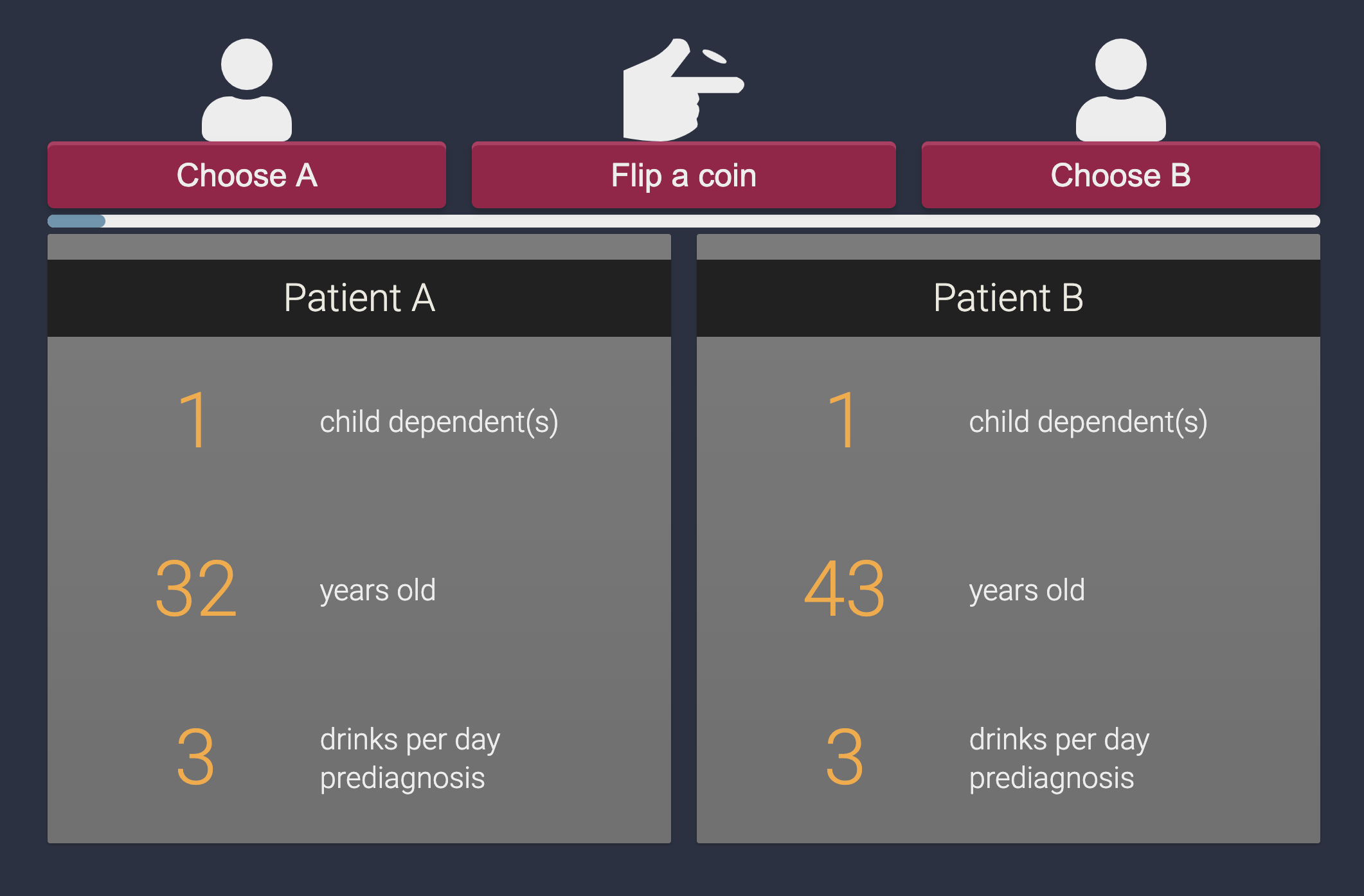}
    \caption{\label{fig:screenshot} Screenshot of a comparison question from our online survey (Study 1). This screenshot is for the group \flipgroup{}; for participants in group \noflipgroup{}, the middle response option ``Flip a coin'' was not shown. }
    \label{fig:my_label}
\end{figure}

All participants were recruited on Amazon Mechanical Turk\footnote{\url{https://www.mturk.com/}} (MTurk). We included only participants in the United States, who have completed more than 500 HITs, with HIT approval rate at least $98\%$, and who have not participated in any previous studies for this project.

\subsection{Study 1}\label{sec:app-study-1}

We recruited 120 participants via MTurk.
One participant was excluded from the cohort due to incompleteness, leaving us a sample of N= 119 (32\% female and 68\% male; mean age = 35.2, SD = 10.12, 82\% white) with N=60 for group \flipgroup{} and N=59 for group \noflipgroup{}.
On our online platform, both groups were asked to make decisions on a set of 15 pairs of hypothetical patients, whose features were pre-determined \emph{a priori}.
Both groups were given the same sequence of scenarios; the features of each patient in these scenarios is included in our dataset (included in the supplement and online, see below).

The \emph{Indecisive} group were given the additional option to flip a coin, instead of choosing one of the two patients. 

Anonymized responses from Study 1 are available online,\footnote{Link removed for blind review.} and included in this paper's online supplement.

\paragraph{Study 1 Analysis: Hypothesis Testing}
For this analysis we refer to each strict response as a ``vote''.
For example if a participant expresses the preference for patient $A$ over patient $B$, we say this is a vote for $A$.
To test hypotheses \textbf{H0-1} and \textbf{H0-2} we first identify the \emph{majority patient} (the patient who received more votes than the other patient); the other patient is referred to as the \emph{minority patient}.
Coincidentally, the majority and minority patients were the same for both groups, \flipgroup{} and \noflipgroup{}.
Table~\ref{tab:pilot-details} shows the number of votes for the minority and majority patient for each question, for both groups.

\begin{table}[]
    \centering
    \begin{tabular}{@{}lcccccc@{}}
\toprule
\multirow{2}{*}{Q\#} & \multicolumn{3}{c}{Group \flipgroup{}} & \multicolumn{2}{c}{Group \noflipgroup{}} \\
\cmidrule(lr){2-4}\cmidrule(lr){5-6}
& \#Maj. & \#Min. & \#Flip & \#Maj. & \#Min.\\
\midrule
1  & 31                             & 5                              & 2        & 38 & 22 \\
2  & 48                             & 2                              & 12     & 50 & 10 \\
3  & 43                             & 2                              & 17     & 57 & 3  \\
4  & 40                             & 13                             & 9      & 42 & 18 \\
5  & 37                             & 0                              & 25     & 51 & 9  \\
6  & 55                             & 0                              & 7      & 57 & 3  \\
7  & 43                             & 1                              & 18     & 56 & 4  \\
8  & 37                             & 9                              & 16     & 48 & 12 \\
9  & 29                             & 8                              & 25     & 43 & 17 \\
10 & 22                             & 5                              & 35     & 54 & 6  \\
11 & 41                             & 12                             & 9      & 43 & 17 \\
12 & 51                             & 3                              & 8      & 54 & 6  \\
13 & 29                             & 4                              & 29     & 55 & 5  \\
14 & 42                             & 1                              & 19     & 56 & 4  \\
15 & 33                             & 9                              & 20     & 47 & 13 \\
\bottomrule
\end{tabular}
\caption{Number of votes for the \emph{majority patient} (\#Maj.) and \emph{minority patient} (\#Min.) for each group. The number of ``flip a coin'' votes (\#Flip) is shown for group \flipgroup{}. The right column Q\# indicates the order in which the comparison was shown to each participant.\label{tab:pilot-details}}
\end{table}

\subsection{Study 2}\label{sec:app-study-2}

We first recruited 150 participants using MTurk for the \flipgroup{} group. 
Each participant was assigned a randomly generated sequence of 40 pairs of hypothetical patients, and they were presented with the option to either give a kidney to one of the patients, or flip a coin (see Figure~\ref{fig:screenshot}). 
Patient features were generated uniformly at random  from the ranges:
\begin{itemize}
    \item \# dependents: 0, 1, 2
    \item age: 25, \dots, 70 
    \item \# drinks: 1, 2, 3, 4, 5
\end{itemize}
In addition, 3 or 4 attention-check pairs, in which the participant is presented with the choice between an already deceased patient and a ``favorable'' patient,\footnote{A 30-year-old patient who consumed 1 alcoholic drink per week, with 2 dependents.} were randomly distributed in each sequence. 
After data collection, 18 participants were excluded for failing at least one attention check, i.e., choosing to give the kidney to the deceased patient. 
This leaves us N=132 participants (age distribution was 31\%: 18-29, 48\%: 39-30, 10\%: 40-49, 6\%: 50-59, 3\%: 60+; gender distribution was 29\%: female, 70\%: male; racial distribution was 75\%: white, 25\% nonwhite).

Next we recruited 153 participants for group \noflipgroup{}; these participants were given the exact same task as the \flipgroup{} group, but without the option to flip a coin. 
21 participants were excluded from the analysis due to attention check failures, leaving us with a final sample of N=132 (age distribution was 26\%: 18-29, 46\%: 39-30, 17\%: 40-49, 10\%: 50-59, 2\%: 60+; gender distribution was 36\%: female,  63\%: male; racial distribution was 72\%: white, 28\% nonwhite).

Anonymized responses from Study 2 are available online,\footnote{Link removed for blind review.} and included in this paper's online supplement.

\section{Fitting Indecision Models}\label{app:model-fitting}

In this appendix we provide additional details on the indecision models from Section~\ref{sec:indecision-models}, as well as details of our computational experiments.

First, in~\ref{app:models} we motivate the score-based decision models (from Section~\ref{sec:indecision-models}) using an intuitive---and equivalent---representation as \emph{response functions}.
In~\ref{app:strict} we provide additional motivation for the \emph{strict} response model from Section~\ref{sec:models-strict}. 
In~\ref{app:group} we provide additional details on our group indecision models.
Finally, in~\ref{app:implementation} we describe the implementation of our computational experiments.

\subsection{Response Functions vs Score-Based Models}\label{app:models}

In the score-based models from Section~\ref{sec:indecision-models}, the agent responds by evaluating a ``score'' for each possible response.
Here we provide an intuitive motivation for each of these indecision models, framed as response \emph{functions}.
As in Section~\ref{sec:indecision-models}, an agent response function $R:\mathcal I \times \mathcal I \rightarrow \{0, 1, 2\}$ maps a pair of items (a comparison question) to a response.
In this section, all agent response functions are expressed in terms of the agent utility function $u(\cdot)$ and threshold $\lambda$.
Each response function identifies a set of \emph{feasible} responses for the agent, which depend on the agent utility function and threshold.
If there are multiple feasible responses, the agent chooses one uniformly at random.
Importantly, we show below that these response functions are identical to the score-based response functions for models in Section~\ref{sec:indecision-models}, when the agent observes no ``noise.''

Below we formalize each response function, grouped by by their ``causes'' (see  Section~\ref{sec:indecision}).

Each of the functions here appears
We emphasize that each of these response ``functions'' is in fact a multifunction, as multiple responses may be possible. However 

\noindent\textbf{Difference-Based Response Functions: \minD{}, \maxD{}}
Here the agent is indecisive when the utility difference between alternatives is either smaller than threshold $\lambda$ (\minD{}) or greater than $\lambda\in \mathbb R_+$ (\maxD{}). 
The corresponding response functions are
\begin{equation*}
\begin{array}{rl}
    \text{\minD{}}: &
            R(i, j) \equiv \begin{cases}
1 & \text{if}\, u(i)- u(j) \geq \lambda \\
    2 &\text{if}\, u(i) - u(j)  \leq \lambda \\
0 &\text{if}\, |u(i) - u(j)| \leq \lambda \\
\end{cases} \\
        \text{\maxD{}}: & 
        R(i, j) \equiv \begin{cases}
1 & \text{if}\, 0 \leq u(i)- u(j) \leq \lambda  \\
    2 &\text{if}\, -\lambda  \leq  u(i) - u(j) \leq 0 \\
0 &\text{if}\, |u(i) - u(j)| \geq \lambda  \\
\end{cases}\\
    \end{array}
\end{equation*}
Note that in these definitions, multiple responses may be feasible (i.e., the conditions may be met for multiple responses.
In this case, we assume the agent selects a feasible response uniformly at random.
For example, for both models \minD{} and \maxD{}, if $u(i) - u(j) = \lambda$ then the agent selects a response randomly with either $1$ or $0$.

In these models the agent response depends on the utility difference between $i$ and $j$, ($u(i) - u(j)$). 
Depending on how this utility difference compares with threshold $\lambda$, the agent may be indecisive.
Since the agent is indecisive only when the absolute difference in item utility ($|u(i) - u(j)|$) is too large or too small, negative $\lambda$ is not meaningful here---thus, we only consider $\lambda>0$.

%
%

\noindent\textbf{Desirability-Based Models: \minL{}, \maxL{}}
Here the agent is indecisive when the utility of \emph{both} alternatives is below threshold $\lambda\in \mathbb R$ (\minL{}), or when the utility of both alternatives is greater than $\lambda$ (\maxL{}).
Unlike the difference-based models, $\lambda$ here may be positive or negative.
The response functions for these models are
\begin{equation*}
\begin{array}{rl}
    \text{\minL{}}: &
    R(i, j) \equiv \begin{cases}
1 & \text{if}\, u(i) \geq \max\{ u(j), \lambda \} \\
   2&\text{if}\,\, u(j) \geq \max\{ u(i), \lambda \}\\ 
0 &\text{if}\, \lambda \geq \max\{u(i), u(j) \}
\end{cases}\\
        \text{\maxL{}}: & 
        R(i, j) \equiv \begin{cases}
1 & \text{if}\, u(j) \leq \min\{u(i), \lambda\} \\
   2 &\text{if}\,\, u(i) \leq \min\{u(j), \lambda\} \\
0 &\text{if}\, \lambda \leq \min\{u(i), u(j)\} \\ 
\end{cases}\\
    \end{array}
\end{equation*}
As before, if there are multiple feasible responses, the agent selects one feasible response uniformly at random. 

Unlike the difference-based models, both positive and negative $\lambda$ are reasonable here.
For example: suppose an agent is only indecisive when both alternatives are very undesireable (e.g., both items have utility less than $-100$).
This agent's decisions might be best modeled by \minL{}, with $\lambda=-100$. 

\noindent\textbf{Conflict-Based Model: \dom{}}
Here the agent is indecisive unless one alternative \emph{dominates} the other in all features, by threshold at least $\lambda\in \mathbb R$.
For this indecision model, we need a utility measure associated with each feature of each item; for this purpose, let $u_n(i)$ be the utility associated with feature $n$ of item $i$.
As before we assume $\lambda$ may be positive or negative.
The response function for this model is
\begin{equation*}
\begin{array}{rl}
    \text{\dom{}}: &
    R(i,j) \equiv \begin{cases}
1 & \text{if}\,\, M_{ij} \geq \max\{ M_{ji},\lambda \}\\
    2 &\text{if}\,\, M_{ji} \geq \max\{ M_{ij},\lambda \} \\
0 &\text{if}\,\, \lambda \geq \max\{ M_{ij},M_{ji} \}\\
\end{cases}
    \end{array}
\end{equation*}
where $M_{ij} \equiv \min\limits_{n=1, \dots, N}\{u_n(i) - u_n(j) \}$ and $M_{ji} \equiv \min\limits_{n=1, \dots, N}\{u_n(j) - u_n(i) \}$.
In other words, $M_{ij}$ is the \emph{minimum} difference between the feature utilities of $i$ and $j$: if $M_{ij}$ is positive, then all features of alternative $i$ are strictly better than those of $j$.
If neither $i$ nor $j$ ``dominates'' the other by at least threshold $\lambda$, then the agent is indecisive.
As before, the agent selects uniformly at random from all feasible responses.

While these response functions appear qualitatively different from the score functions in Section~\ref{sec:indecision-models}, they are in fact identical under certain circumstances.
\begin{prop}
For each indecision model (\minD{}, \maxD{}, \minL{}, \maxL{}, \dom{}), the response function given in Appendix~\ref{app:models} is identical to the response function induced by score functions $S_0(\cdot, \cdot)$ and $S_1(\cdot, \cdot)$ as in Section~\ref{sec:indecision-models}, when the agent observes no score error.
This score-induced response function is expressed as
$$
R^S(i,j) \equiv \argmax\limits_{r\in\{0, 1, 2\}} S_r(i, j)
$$
where if multiple scores are maximal (i.e., the corresponding response is \emph{feasible}), the agent selects a response with maximal score uniformly at random.
\end{prop}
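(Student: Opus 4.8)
The plan is to prove the claim by direct, model-by-model verification, using one structural observation to cut the work roughly in half. Since $R^S(i,j)=\argmax_{r\in\{0,1,2\}} S_r(i,j)$ and tied scores are declared feasible, a response $r$ lies in the feasible set exactly when its score weakly dominates the other two: $r=1$ is feasible iff $S_1(i,j)\ge S_0(i,j)$ and $S_1(i,j)\ge S_2(i,j)$, and symmetrically for $r=0$ and $r=2$. Because $S_2(i,j)=S_1(j,i)$ and each appendix response function has its $r=2$ case equal to its $r=1$ case with $i$ and $j$ interchanged, it suffices to verify the $r=1$ and $r=0$ regions; the $r=2$ region then follows by relabeling. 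For each model I would substitute the explicit scores, simplify the two defining inequalities into conditions on $u(\cdot)$, and check that the resulting region coincides---as a set, boundaries included---with the matching appendix case. Since the uniform-random tie-breaking rule is identical on both sides, matching these weak-inequality regions suffices, and no separate boundary argument is required.

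For the difference-based models \minD{} and \maxD{}, I would set $d=u(i)-u(j)$, so that $S_1=d$, $S_2=-d$, and $S_0$ equals $\lambda$ or $2|d|-\lambda$ respectively. The pairwise comparisons then become elementary inequalities in $d$; the only wrinkle is the absolute value in \maxD{}, which I would resolve by splitting on $\operatorname{sign}(d)$. In each sign-branch the feasibility conditions collapse to the stated bounds---for instance $0\le d\le\lambda$ for $r=1$ under \maxD{}---and recombining the branches reproduces the appendix response function verbatim.

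The substantive case analysis lies in the desirability and conflict models. For \dom{} the argument is immediate: with $S_1=M_{ij}$, $S_2=M_{ji}$, $S_0=\lambda$, the three dominance conditions are literally $M_{ij}\ge\max\{M_{ji},\lambda\}$, $M_{ji}\ge\max\{M_{ij},\lambda\}$, and $\lambda\ge\max\{M_{ij},M_{ji}\}$, matching the response function with no algebra; \minL{} is identical with $M_{ij},M_{ji}$ replaced by $u(i),u(j)$. The main obstacle is \maxL{}, where $S_0$ depends on $\min\{u(i),u(j)\}$: resolving the argmax requires splitting on whether $u(i)\ge u(j)$, and after simplifying within each branch one must verify that the feasibility regions collapse exactly to the compact expressions $u(j)\le\min\{u(i),\lambda\}$, $u(i)\le\min\{u(j),\lambda\}$, and $\lambda\le\min\{u(i),u(j)\}$. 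This is the one place where the equivalence is not visually apparent and where the equal-utility boundary must be checked with care to confirm the tied feasible sets agree. Once this branch is settled, assembling the five verifications shows that every score-induced response multifunction coincides with its appendix counterpart, completing the proof.
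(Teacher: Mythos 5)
Your proposal is correct and follows essentially the same route as the paper's proof: reduce the claim to showing the \emph{feasible sets} coincide (since the uniform tie-breaking rule is shared), verify the weak-inequality score conditions model by model, dispatch $r=2$ by the symmetry $S_2(i,j)=S_1(j,i)$, split on the sign of $u(i)-u(j)$ for \maxD{} and \maxL{}, and observe that \dom{} is \minL{} with $M_{ij},M_{ji}$ in place of $u(i),u(j)$. No substantive differences.
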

\begin{proof}
We prove equivalence for each indecision model separately.
Note that, if both response functions $R^S(i,j)$ and $R(i,j)$ have the same set of \emph{feasible} responses for a given comparison $(i,j)$, then these responses are identical--since both response function chooses a feasible response uniformly at random.
Thus, we prove that the set of \emph{feasible responses} is the same for both $R^S(i,j)$ and $R(i,j)$, for an arbitrary comparison $(i,j)$.

\paragraph{\minD{}}
For score-based response function $R^S(i,j)$, response $1$ is feasible if the following conditions are met
$$
\begin{array}{l} 
S_1(i, j) \geq  S_0(i, j) \\
S_1(i, j) \geq S_2(i, j)
\end{array} \iff 
\begin{array}{l} 
u(i) - u(j) \geq \lambda \\
u(i) - u(j) \geq 0
\end{array} 
$$
where the left and right side are equivalent.
Note that the right side conditions are equivalent to the conditions for response $1$ in $R(i, j)$, since $\lambda$ is positive.
Note that the same argument holds for response $2$.

Next, for score-based response function $R^S(i,j)$, response $0$ is feasible if the following conditions are met
$$
\begin{array}{l} 
S_0(i, j) \geq  S_1(i, j) \\
S_0(i, j) \geq S_2(i, j)
\end{array} \iff 
\begin{array}{l} 
\lambda \geq u(i) - u(j) \\
\lambda \geq u(j) - u(i) 
\end{array} 
$$
and these conditions are equivalent to $|u(i) - u(j)| \leq \lambda$, since $\lambda$ is positive. 
This condition is equivalent to the condition for response $0$ in $R(i, j)$.

\paragraph{\maxD{}}
For score-based response function $R^S(i,j)$, response $1$ is feasible if the following conditions are met
$$
\begin{array}{l} 
S_1(i, j) \geq  S_0(i, j) \\
S_1(i, j) \geq S_2(i, j)
\end{array} \iff 
\begin{array}{l} 
u(i) - u(j) \\
\quad \geq 2|u(i) - u(j)| - \lambda \\
 \\
u(i) - u(j) \geq 0.
\end{array} 
$$
Note that the first constraint right side reduces to $u(i) - u(i) \leq \lambda$; thus, these conditions are equivalent to the conditions for response $1$ in $R(i, j)$.
Note that the same argument holds for response $2$.

Next, for score-based response function $R^S(i,j)$, response $0$ is feasible if the following conditions are met
{\small
$$
\begin{array}{l} 
S_0(i, j) \geq  S_1(i, j) \\
S_0(i, j) \geq S_2(i, j)
\end{array} \iff 
\begin{array}{l} 
2|u(i) - u(j)| - \lambda  \geq u(i) - u(j) \\
2|u(i) - u(j)| - \lambda  \geq u(j) - u(i).
\end{array} 
$$
}
There are two cases: (1) if $u(i) \geq u(j)$, then the first condition on the right side reduces to $|u(i) - u(j) \geq \lambda$, and the second condition on the right side holds trivially; (2) if $u(i) < u(j)$, then the second condition on the right side reduces to $|u(i) - u(j) \geq \lambda$, and the first condition on the right side holds trivially.
In both cases, these conditions are equivalent to the conditions for response $0$ in $R(i, j)$.

\paragraph{\minL{}}
For score-based response function $R^S(i,j)$, response $1$ is feasible if the following conditions are met
$$
\begin{array}{l} 
S_1(i, j) \geq  S_0(i, j) \\
S_1(i, j) \geq S_2(i, j)
\end{array} \iff 
\begin{array}{l} 
u(i) \geq \lambda \\
u(i)  \geq u(j)
\end{array} 
$$
where the right-side conditions reduce to $u(i) \geq \max\{u(i), \lambda\}$, which is equivalent to the condition for response $1$ in $R(i, j)$.
Note that the same argument holds for response $2$.

Next, for score-based response function $R^S(i,j)$, response $0$ is feasible if the following conditions are met
$$
\begin{array}{l} 
S_0(i, j) \geq  S_1(i, j) \\
S_0(i, j) \geq S_2(i, j)
\end{array} \iff 
\begin{array}{l} 
\lambda \geq u(i) \\
\lambda  \geq u(j)
\end{array} 
$$
which is equivalent to $\lambda \geq \max\{u(i), u(j)\}$, the condition for response $0$ in $R(i, j)$.

\paragraph{\maxL{}}
For score-based response function $R^S(i,j)$, response $1$ is feasible if the following conditions are met

$$
\begin{array}{l} 
S_1(i, j) \geq  S_0(i, j) \\
S_1(i, j) \geq S_2(i, j)
\end{array} \iff 
\begin{array}{l} 
u(i) - u(j) \\
 \quad\geq \max\{u(i), u(j)\} - \lambda \\
 \\
u(i)  \geq u(j).
\end{array} 
$$
The first condition on the right side reduces to $u(j) \leq \lambda$; thus, the right side conditions are equivalent to $u(j) \leq \min\{u(i), \lambda\}$, which is the condition for response $1$ in function $R(i, j)$.
Note that the same argument holds for response $2$.

Next, for score-based response function $R^S(i,j)$, response $0$ is feasible if the following conditions are met
$$
\begin{array}{l} 
S_0(i, j) \geq  S_1(i, j) \\
S_0(i, j) \geq S_2(i, j)
\end{array} \iff 
\begin{array}{l} 
\max\{u(i), u(j)\} - \lambda \\
\quad \geq u(i) - u(j) \\
 \\
\max\{u(i), u(j)\} - \lambda \\
 \quad \geq u(j) - u(i).
\end{array} 
$$

There are two cases: (1) if $u(i) \geq u(j)$, then the first condition on the right side reduces to $u(j) \geq \lambda$, and the second condition on the right side reduces to $2u(j) - u(j) \geq \lambda$ (which holds trivially); (2) if $u(i) < u(j)$, then the second condition reduces to $u(i) \geq \lambda$ (and the first condition holds trivially).
In both cases, these conditions are equivalent to $\lambda \leq \min\{ u(i), u(j)\}$, which is the condition for response $0$ in $R(i, j)$.

\paragraph{\dom{}}
This proof is identical to that of \minL{}: let $u(i)$ and $u(j)$ be replaced by $M_{ij}$ and $M_{ji}$, respectively, and the proof is identical.

\end{proof}

\subsection{Strict Decision Models}\label{app:strict}

In Section~\ref{sec:models-strict} we describe how indecision models can be used to model scenarios where an indecisive agent is \emph{required} to express a strict preference.
Here we assume that the agent uses a two-step process to respond, represented in Figure~\ref{fig:strict-decision}.

\tikzset{
    desicion/.style={
        diamond,
        draw,
        text width=3em,
        text badly centered,
        inner sep=0pt
    },
    block/.style={
        rectangle,
        draw,
        text width=10em,
        text centered,
        rounded corners
    },
    finalblock/.style={
        rectangle,
        draw,
        text width=10em,
        text centered,
    },
    cloud/.style={
        draw,
        ellipse,
        minimum height=2em
    },
    descr/.style={
        fill=white,
        inner sep=2.5pt
    },
    connector/.style={
     -latex,
     font=\scriptsize
    },
    rectangle connector/.style={
        connector,
        to path={(\tikztostart) -- ++(#1,0pt) \tikztonodes |- (\tikztotarget) },
        pos=0.5
    },
    rectangle connector/.default=-2cm,
    straight connector/.style={
        connector,
        to path=--(\tikztotarget) \tikztonodes
    }
}

\begin{figure}[h]
    \centering
   
\begin{tikzpicture}[node distance=1cm]
\centering
\node (m1) [block] {Indecisive agent is presented with comparison $(i, j)$};
\node (m2) [block,below of=m1,node distance=2cm] {Agent draws response $r$ from their response distribution};
\node (m3)  [block,below left of=m2, node distance=3cm] {Agent flips a coin with ``heads'' probability $q$};
\node (m4)  [finalblock,below right of=m2, node distance=3cm] {Agent responds $r$};

\draw[thick,->] (m1) -- (m2);
\draw [thick,->] (m2) -- node[descr] {$r\in \{1, 2\}$} (m4);
\draw [thick,->] (m2) -- node[descr] {$r=0$} (m3);

\node (m5)  [finalblock,below of=m3, node distance=2.5cm] {Agent draws from strict distribution $p^S(i, j)$};
\node (m6)  [finalblock,right of=m5, node distance=4cm] {Agent responds either $1$ or $2$ uniformly at random};

\draw [thick,->] (m3) -- node[descr] {heads} (m5);
\draw [thick,->] (m3) -- node[descr] {tails} (m6);

\end{tikzpicture}
\caption{\label{fig:strict-decision} Flowchart describing our model for an indecisive agent who is required to express a strict preference.}
\end{figure}
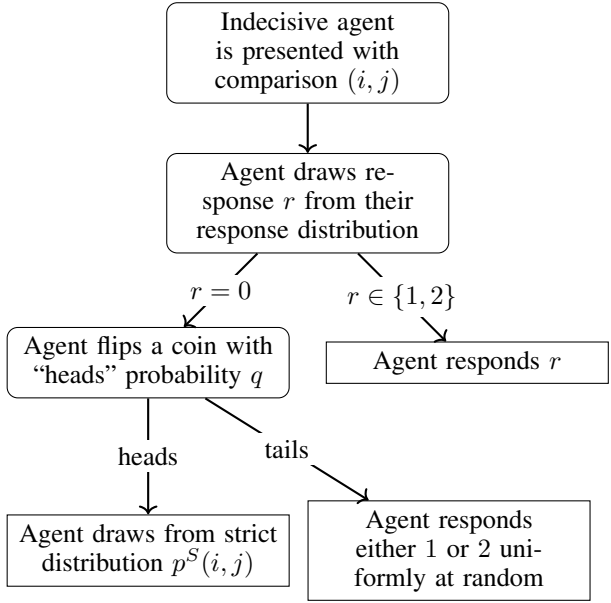

If the agent's coin flip is ``heads'' (with probability $q$), then the agent draws from a \emph{strict} version of their response distribution, defined as
$$ p^S(i, j, r) \equiv \frac{e^{S_r(i,j)}}{e^{S_1(i,j)} + e^{S_2(i,j)}}$$
for $r\in \{0, 1\}$. 
Note that this is similar to the agent's \emph{true} response distribution (Equation~\ref{eq:dist-flip}), but assigns zero probability to response $0$.

The overall response distribution described in Figure~\ref{fig:strict-decision} has a closed-form expression, since the probability-$q$ coin flip is independent from each draw of the agent's decision function.
As stated in Section~\ref{sec:models-strict}, this distribution is
\begin{align*}
p_{strict}(i, j, r) \equiv \begin{cases} 
\begin{array}{l} q \left( \frac{e^{S(i, j)}+ (1/2)e^{S_0(i, j)}}{C} \right) \\+ \frac{1-q}{D} \left( e^{S_1(i, j)} \right)  \end{array} & \text{if} \,\, r=1 \\
\begin{array}{l} q \left( \frac{e^{S_2(i, j)}+ (1/2)e^{S_0(i, j)}}{C} \right) \\+ \frac{1-q}{D}e^{S_2(i, j)}   \end{array}  & \text{if} \,\, r=2 
\end{cases}
\end{align*}
where, $C \equiv e^{S_0(i, j)} + e^{S_1(i, j)}+ e^{S_2(i, j)}$ , and $D\equiv e^{S_1(i, j)}   + e^{S(_2(i, j)}$.
The (heads) condition from above has another interpretation: the agent chooses to sample from a ``strict'' logit, induced by only the score functions for strict responses, $S_1(i, j)$ and $S_1(i, j)$.
We discuss this model in more detail, and provide an intuitive example, in Appendix~\ref{app:model-fitting}.

\subsection{Group Decision Models}\label{app:group}

Here we outline the mathematical group decision models from Section~\ref{sec:group}.

A set of $L$ observed responses is represented by vectors $\bm i\in \mathcal I^L$, $\bm j \in \mathcal I^L$, $\bm r\in \{0, 1, 2\}^L$, where $\bm i_k$ and $\bm j_k$ are the indices of items $i$ and $j$ in query $j$, and $\bm r_k$ is the observed agent's response.

\paragraph{\texttt{VMixture}}
This model is parameterized only by the best-fit models for each of its constituent voters.
Let $V\in \mathbb Z$ be the number of voters, and let $S^v_r(\cdot, \cdot)$ be the best-fit score function for voter $v$ and response $r$. 
Since we take an MLE approach, the goodness-of-fit metric for these models is the log-likelihood of the model, given observed responses.

The log-likelihood for model \texttt{VMixture} is 

\begin{equation*}
\sum_{l=1}^L \log \left(\sum_{v=1}^V \frac{1}{V} p^{v}(\bm i_l,\bm j_l, \bm r_l) \right)
\end{equation*}
where 
$$ p^v(i, j, r) \equiv \frac{e^{S^v_r(i, j)}}{e^{ S^v_0(i, j)} + e^{ S^v_1(i, j)} + e^{ S^v_2(i, j)}}
$$
is the response distribution for voter $v$.

\paragraph{$k$-\texttt{Mixture}}
This model class is parameterized by $k$ distinct sets of submodel parameters: each submodel consists of a utility vector $\bm u\in \mathbb R^N$ and threshold $\lambda\in \mathbb R$; the \emph{type} of each model is also a variable (i.e., a categorical variable).
Weight parameters $\bm w$ indicate the importance of each submodel.
Let $S^k_r(\cdot, \cdot)$ be the score function for model $l\in \{1, \dots, k\}$ and response $r\in \{0, 1, 2\}$; these score functions depend on the type of each model (see Section~\ref{sec:indecision-models}).
For the $k$-\texttt{Mixture} model, the log-likelihood is
\begin{equation*}
\sum_{l=1}^L \log \left(\sum_{k'=1}^k \frac{e^{\bm w_{k'}}}{\sum_{n=1}^k e^{\bm w_n}} p^{k'}(\bm i_l,\bm j_l, \bm r_l) \right)
\end{equation*}
where 
$$ p^{k'}(i, j, r) \equiv \frac{e^{S^{k'}_r(i, j)}}{e^{ S^{k'}_0(i, j)} + e^{ S^{k'}_1(i, j)} + e^{ S^{k'}_2(i, j)}}
$$
is the response distribution for model $k'$.

\subsection{Experiments and Implementation}\label{app:implementation}
All code used for our computational experiments is available online,\footnote{\url{https://github.com/duncanmcelfresh/indecision-modeling}} and attached in our supplementary material.
All code is written in Python 3.7, and uses packages Ax\footnote{\url{https://ax.dev/}} for random sampling.
All experiments were run on a single Intel Xeon E5-2690 node with 16GB memory.

For all experiments, models were fit by sampling several random parameter sets using a Sobol process (implemented using Ax).
Each model is ``trained'' using a different number or random Sobol points in our experiments:
\begin{itemize}
    \item Individual indecision models (Table~\ref{tab:individual-results}): 1,000 points for \flipgroup{}, and 5,000 for \noflipgroup{} (which uses an additional parameter $q$).
    \item Group indecision models (Table~\ref{tab:group-expert-population}, models \minD{}, \maxD{}, \minL{}, \maxL{}, \dom{}, \flipLogit{}): 5,000 points 
    \item \texttt{VMixture}: 500 points for group \flipgroup{} and 1,000 points for \noflipgroup{}, for each individual model.
    \item $k$-\texttt{Mixture}, $k$-\minD{}: 100,000 points
\end{itemize}

\end{document}